\theoremstyle{plain}
\newtheorem{theorem}{Theorem}[section]
\newtheorem{corollary}[theorem]{Corollary}
\theoremstyle{definition}
\theoremstyle{remark}
\definecolor{darkblue}{rgb}{0, 0, 0.5}
\title{Scalable Best-of-N Selection for Large Language Models via Self-Certainty}
\author{%
  Zhewei Kang\thanks{Equal contribution} \\
  UC Berkeley\\
  \texttt{waynekang@berkeley.edu} \\
  \And
  Xuandong Zhao\footnotemark[1]~~\thanks{Corresponding author} \\
  UC Berkeley\\
  \texttt{xuandongzhao@berkeley.edu} \\
  \And
  Dawn Song \\
  UC Berkeley\\
  \texttt{dawnsong@berkeley.edu} \\
}
\begin{document}

\maketitle

\begin{abstract}
Best-of-N selection is a key technique for improving the reasoning performance of Large Language Models (LLMs) through increased test-time computation. Current state-of-the-art methods often employ computationally intensive reward models for response evaluation and selection. Reward-free alternatives, like self-consistency and universal self-consistency, are limited in their ability to handle open-ended generation tasks or scale effectively. To address these limitations, we propose \emph{self-certainty}, a novel and efficient metric that leverages the inherent probability distribution of LLM outputs to estimate response quality without requiring external reward models. We hypothesize that higher distributional self-certainty, aggregated across multiple samples, correlates with improved response accuracy, as it reflects greater confidence in the generated output. Through extensive experiments on various reasoning tasks, we demonstrate that self-certainty (1) scales effectively with increasing sample size $N$, akin to reward models but without the computational overhead; (2) complements chain-of-thought, improving reasoning performance beyond greedy decoding; and (3) generalizes to open-ended tasks where traditional self-consistency methods fall short. Our findings establish self-certainty as a practical and efficient way for improving LLM reasoning capabilities.
The code is available at \url{https://github.com/backprop07/Self-Certainty}
\end{abstract}

\section{Introduction}

Large Language Models (LLMs) have achieved impressive reasoning abilities, yet reliably producing accurate outputs for complex tasks often requires techniques to enhance inference-time performance~\citep{wu2024inference, xiang2025towards}. \emph{Best-of-N} selection, generating and selecting from multiple candidate responses, has emerged as a powerful paradigm for significantly improving reasoning accuracy~\citep{snell2024scaling}. Current Best-of-N methods frequently rely on reward models, such as Outcome Reward Models (ORMs) \citep{cobbe2021training} and Process Reward Models (PRMs) \citep{lightman2023let,uesato2022solving}, not only for output selection but also for data annotation to further refine LLM reasoning capabilities~\citep{uesato2022solving,wang2022self}.


However, reward models introduce substantial computational and practical challenges. They are computationally expensive to train or fine-tune, often requiring as many parameters as the LLM itself~\citep{wang2024math}, are vulnerable to distribution shifts, and can suffer from ``reward hacking''~\citep{eisenstein2023helping}. While techniques like reward model ensembles~\citep{coste2023reward} offer partial mitigation, they further increase overhead.


As a lighter-weight alternative, self-consistency~\citep{wang2022self} aggregates multiple outputs using majority voting. However, it is applicable only to tasks with directly comparable string-matched answers, limiting its use for differentiating reasoning paths or open-ended tasks. Universal self-consistency (USC)~\citep{chen2023universal} prompts the LLM to choose the most consistent response, but its gains are constrained by context length and model ability, sometimes declining with larger $N$~\citep{gsm}, and can be ineffective for small models, as our research confirms. Moreover, self-consistency and USC lack a direct quality score for responses, limiting their applicability in tasks such as candidate ranking.


To overcome these limitations, we propose leveraging the LLM's inherent probabilistic output for a more practical, general, and robust approach to Best-of-N selection. We hypothesize that an LLM’s probability distribution naturally encodes its \emph{certainty}. We introduce \textbf{self-certainty}, a novel metric quantifying this confidence by measuring the divergence of the predicted token distribution from a uniform distribution. A distribution diverging significantly from uniform indicates a more peaked—and thus more certain—prediction. As shown in  Figure~\ref{fig:freq}, self-certainty demonstrates a stronger signal for distinguishing correct responses. Notably, it incurs almost no computational overhead, as the token distribution is generated alongside the tokens during inference. Inspired by Borda Voting, we enhance self-consistency by incorporating self-certainty-based ranking, assigning weighted votes based on self-certainty rank using a scaling factor of $(N-\text{ranking}+1)^p$, effectively prioritizing more confident responses.

We rigorously evaluate our methods across diverse reasoning benchmarks, including LiveBench-Math \citep{white2024livebench}, GSM8K \citep{gsm}, MATH \citep{MATH}, CRUXEval \citep{gu2024cruxeval} and LiveCodeBench \citep{jain2024livecodebench}, spanning mathematical reasoning, code reasoning, and code generation. Our experiments reveal that self-certainty-based voting consistently outperforms self-consistency in Best-of-N selection of reasoning tasks, effectively adapting to varying sample sizes and question difficulties.

\begin{figure}[t]
    \centering
    \includegraphics[width=0.90\linewidth]{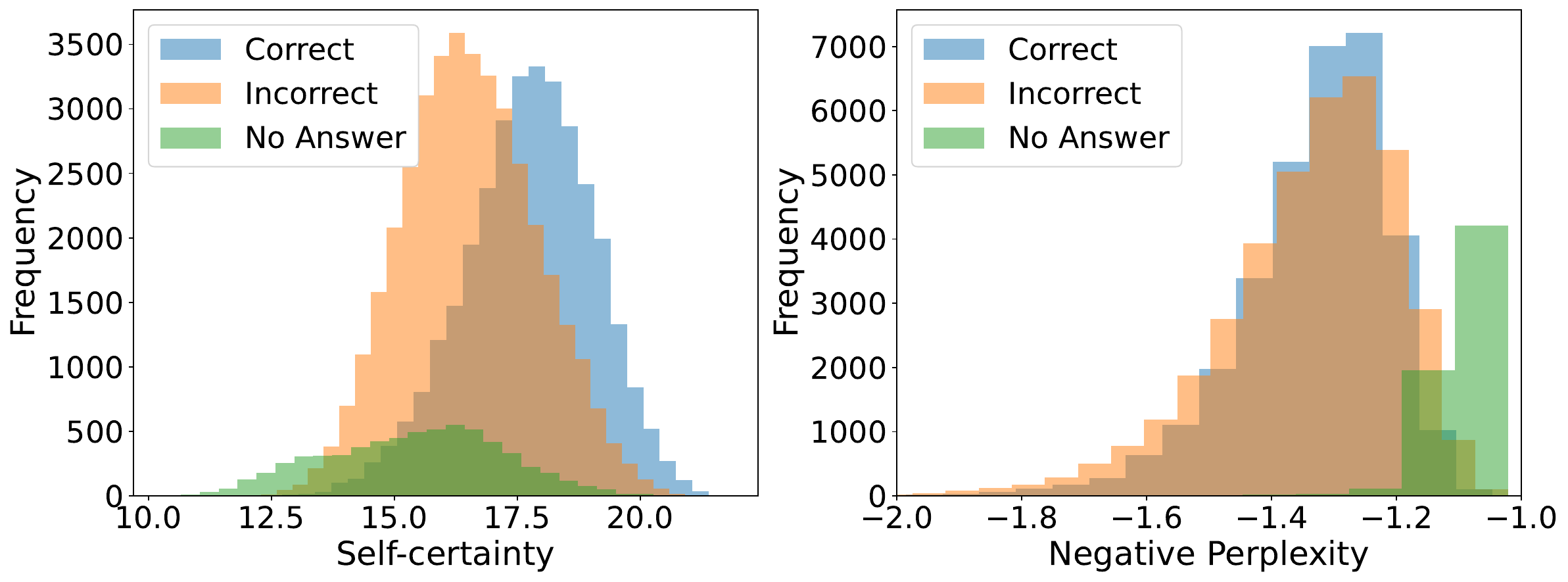}
    \caption{Distribution of self-certainty and negative perplexity for correct, incorrect, and no-answer responses on the MATH dataset~(Level 4)~\citep{MATH} using the Llama-3.1-8B-Instruct with 64 samples per question. For self-certainty, the distributions of correct and incorrect groups concentrate around different central values, with frequencies decreasing smoothly toward both extremes. In contrast, negative perplexity fails to clearly separate correct from incorrect outputs and favors no-answer responses, highlighting self-certainty's effectiveness in assessing response quality.} 
    \label{fig:freq}
\end{figure}

The key advantages of self-certainty are:
\begin{itemize}[leftmargin=*, itemsep=0pt, topsep=0pt]  
\item \textbf{Scalability}: Self-certainty scales efficiently with increasing sample size \(N\), mirroring reward models in scalability but without their computational burden.
\item \textbf{Orthogonal Enhancement to Chain-of-Thought}: Self-certainty complements chain-of-thought (CoT) reasoning~\citep{wei2022chain}, outperforming self-consistency through weighted voting.
\item \textbf{Generalizability to Open-Ended Tasks}: Self-certainty generalizes effectively to open-ended responses (e.g., code) where self-consistency is inapplicable, surpassing greedy decoding and USC.
\end{itemize}

\section{Related Works}
\paragraph{Reward Models for Response Reranking and Selection.}
Evaluating LLM outputs with external models like verifiers or reward models (ORMs, PRMs) can enhance reasoning and select best samples~\citep{lightman2023let, wang2024math}. However, these models are often task-specific, sensitive to the base model~\citep{eisenstein2023helping}, and computationally expensive to train, sometimes requiring parameter counts similar to the LLMs they evaluate~\citep{wang2024math}. Our approach, self-certainty, avoids additional training by using the LLM's own logits for efficient quality assessment.


\paragraph{Consistency-Based Response Selection.}
Self-consistency~\citep{wang2022self} leverages the model's internal understanding by selecting the most common response from multiple outputs, improving reliability. However, it's limited to tasks with convergent final answers and hard to generalize to open-ended generation. universal self-consistency (USC)~\citep{chen2023universal} extends to more tasks but faces scalability issues and lacks a certainty measure. Self-certainty overcomes these limitations by directly measuring response confidence from token distributions, handling open-ended tasks and scaling efficiently.

\paragraph{Confidence Estimation for Model Responses.}
Various methods estimate model confidence \citep{geng2023survey}. Self-Evaluation~\citep{ren2023self} uses yes/no token probabilities. BSDetector~\citep{chen2024quantifying} measures similarity and prompts for self-verification. TrustScore~\citep{zheng2024trustscore} computes likelihood against modified-prompt distractors. These often require multiple evaluations, hindering scalability for Best-of-N selection. In contrast, self-certainty leverages the output token distribution directly, avoiding extra prompts and enabling efficient, scalable selection.

\section{Measuring Confidence of LLMs} \label{sec:candidate}

This section explores metrics for quantifying LLM prediction confidence, comparing probabilistic measures with distributional ones to identify the most effective for reliable output selection.

\subsection{LLM Background}

Large Language Models, typically Transformer-based \citep{vaswani2017attention}, autoregressively generate token sequences $y = (y_1, \dots, y_m)$ from an input $x = (x_1, \dots, x_n)$. At each step $i$, the model produces logits $\ell_i \in \mathbb{R}^{V}$ (where $V=|\mathcal{V}|$ is vocabulary size), which convert to a probability distribution $p(\cdot | x, y_{<i}) \in [0,1]^{V}$ over the vocabulary for the next token $y_i$. This distribution reflects the model’s belief about the next token.

\subsection{Sentence-Level Probabilistic Confidence}

Probabilistic confidence quantifies a model's certainty in its predictions by directly leveraging the probabilities assigned to sampled tokens.

\paragraph{Average Log-Probability.} A common confidence measure is the average log-probability (AvgLogP) of sampled tokens:
\begin{equation}
    \text{AvgLogP} \coloneqq \frac{1}{n}\sum_{i=1}^n\log\left[p(y_i|x,y_{<i})\right] 
\end{equation}
where \(p(y_i|x,y_{<i})\) is the probability of token \(y_i\). Higher AvgLogP values indicate the model assigns higher probabilities to generated tokens, reflecting greater confidence.

\paragraph{Perplexity.} Perplexity is a common metric for evaluating language models, defined as the exponentiated average negative log-likelihood:
\begin{equation}\label{eq:perplexity}
    \text{Perplexity} \coloneqq \exp\left(-\frac{1}{n}\sum_{i=1}^n\log\left[p(y_i|x,y_{<i})\right]\right) 
\end{equation}
Since \(\text{Perplexity} = \exp(-\text{AvgLogP})\), both measures are equivalent when selecting responses. We use negative perplexity for Best-of-N selection, though studies show it struggles with long contexts \citep{hu2024can}, suggesting the need for alternatives.

\subsection{Distributional Confidence}
Distributional confidence measures consider the entire probability distribution over the vocabulary at each generation step, capturing a more holistic view of the model's certainty beyond just sampled token probabilities.

A sentence-level distributional confidence measure can be defined as:
\begin{equation}
    \text{Distributional-Confidence} :=  F(f(P_{y|x})) 
\end{equation}
where \(P_{y|x} = \left(p(\cdot|x), p(\cdot|x,y_{1}), \dots, p(\cdot|x,y_{< n})\right)\) represents the sequence of token-level probability distributions, \(f\) produces a confidence score for each token, and \(F\) aggregates these into a sentence-level confidence. With output length \(n\), we define \(F\) as the average across all positions:
\begin{equation}\label{eq:DC}
    F(C_1, \dots, C_n) = \frac{1}{n} \sum_{i=1}^{n} C_{i}, \quad C_{i} = f(p(\cdot|x,y_{\leq i}))  
\end{equation}
For function $f$, we explore metrics that quantify how "peaked" or "concentrated" the probability distribution is, with more concentrated distributions suggesting higher model certainty:

\paragraph{Kullback-Leibler (KL) Divergence.}
Drawing upon neural networks as Maximum Likelihood Estimators \citep{lecun2015deep}, we hypothesize that higher confidence corresponds to distributions further from a uniform distribution \(U\) (representing maximum uncertainty). KL Divergence quantifies this difference:
\begin{equation}\label{eq:sc}
C_i^{\text{KL}} \coloneqq \mathrm{KL}(U\parallel p(\cdot|x,y_{\leq i})) =  \sum_{j=1}^{V} \frac{1}{V} \log\left(\frac{1/V}{p(j|x,y_{\leq i})}\right) = -\frac{1}{V}\sum_{j=1}^{V} \log\left(V\cdot p(j|x,y_{\leq i})\right)
\end{equation}
\paragraph{Gini Impurity.}
Originally introduced in decision trees \citep{breiman2017classification}, Gini Impurity measures the probability that two randomly sampled tokens belong to different classes. A more concentrated distribution indicates higher confidence:
\begin{equation}
     C_i^{\text{Gini}}\coloneqq 1-I_G(p(\cdot|x,y_{\leq i})) = \sum_{j=1}^{V} (p(j|x,y_{\leq i}))^2  
\end{equation}
\paragraph{Entropy.}
Entropy measures the disorder in a probability distribution. Higher entropy indicates greater uncertainty, so we use negative entropy as a confidence measure:
\begin{equation}
     C_i^{\text{Entropy}} \coloneqq \sum_{j=1}^{V} p(j|x,y_{\leq i}) \log(p(j|x,y_{\leq i})) 
\end{equation}
\paragraph{Distributional Perplexity (DP).} 
We apply a negative sign to perplexity to interpret it as confidence. To distinguish from standard perplexity (Equation~\ref{eq:perplexity}), we denote it as DP:
\begin{equation}
     C_i^{\text{DP}}\coloneqq -\exp{\bigg(-\sum_{j=1}^{V} p(j|x,y_{\leq i}) \log(p(j|x,y_{\leq i})\bigg)}
\end{equation}
\subsection{Our Primary Metric: Self-Certainty}
Empirical evaluations (Figure~\ref{fig:freq},~\ref{fig:expression_selection}) demonstrate that KL-divergence-inspired distributional confidence more effectively distinguishes correct samples from incorrect ones and achieves superior accuracy at higher \(N\) values. Theoretically, an infinitesimal gradient step that increases the log-likelihood of a desired token raises its token-wise self-certainty whenever that token already carries sufficiently high probability. A detailed proof is provided in Appendix~\ref{sec:self-cert-appendix}. Consequently, standard training procedures, such as supervised fine-tuning and reinforcement learning, that increase the probability of the desired token also tend to increase its self-certainty, supporting our choice of self-certainty as a principled confidence metrics. Based on these findings, we define self-certainty as our primary confidence metric for best-of-\(N\) selection:
\begin{equation}
    \textbf{Self-Certainty} = -\frac{1}{nV}\sum_{i=1}^n\sum_{j=1}^{V} \log\left(V\cdot p(j|x,y_{\leq i})\right)
\end{equation}
Cross entropy between the predicted distribution and a uniform distribution provides an equivalent confidence measure, differing from KL-divergence only by a constant. The self-certainty based on cross-entropy is:
\begin{equation}
\text{Self-Certainty (CE)} = -\frac{1}{n V} \sum_{i=1}^{n} \sum_{j=1}^{V} \log \bigl( p(j\mid x, y_{\leq i}) \bigr).
\end{equation}

\subsection{Analysis}

Reward Models (PRMs and ORMs) typically evaluate responses using the minimum reward across all reasoning steps \citep{lightman2023let,wang2024math}, prioritizing error detection over progress assessment. Self-certainty methods effectively identify mistakes through averaging because early errors propagate, reducing confidence in subsequent steps. As illustrated in Figure~\ref{fig:example0}, when sample I contains an initial error, self-certainty assigns lower confidence to all following steps despite their correctness, while negative perplexity fails to distinguish between reasoning paths following correct versus incorrect premises. Additionally, distributional confidence detects correct reasoning from the first few tokens.

\section{Self-Certainty with Voting Method}\label{sec:vote}
\begin{figure*}[t]
    \centering
    \includegraphics[width=0.95\linewidth]{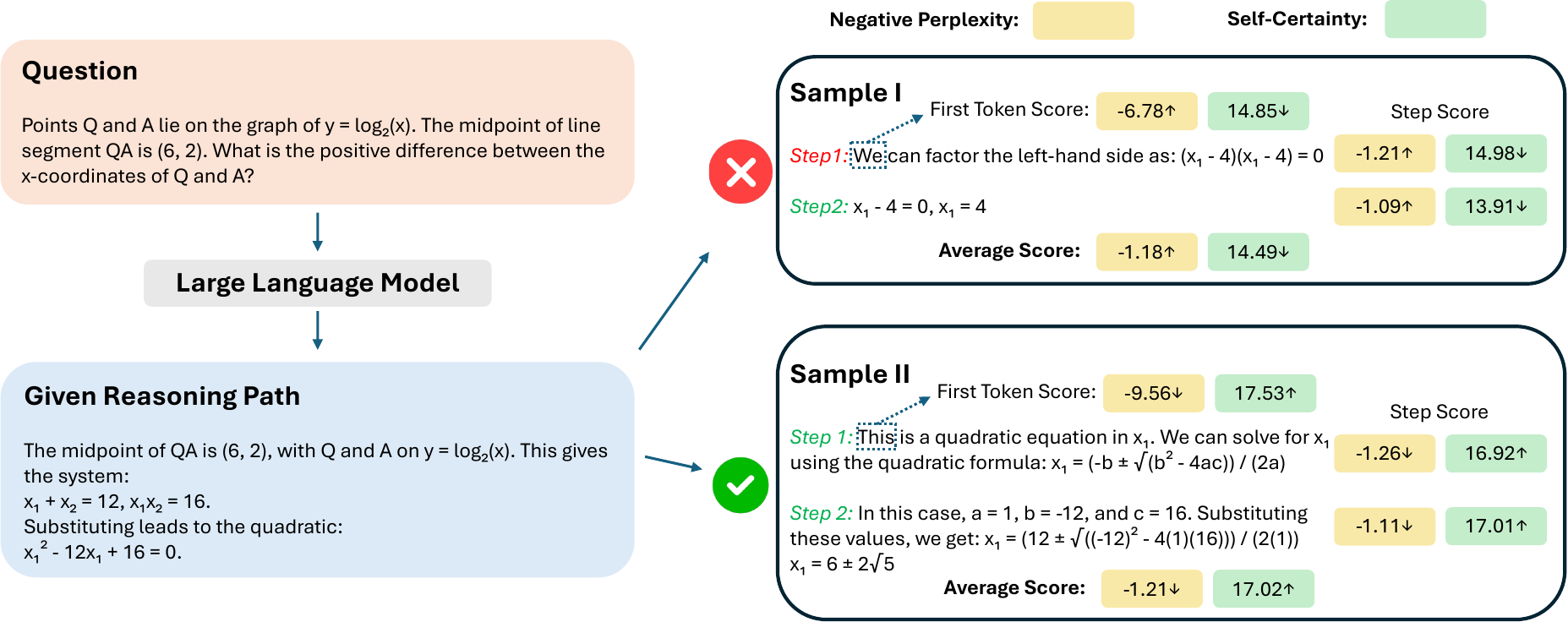}
    \caption{Comparison of reasoning paths in solving a quadratic equation for the given problem using self-certainty and negative perplexity. Sample I factors the quadratic equation directly, while Sample II applies the quadratic formula. The figure illustrates an example of how the two measures assign confidence scores at each reasoning step, showing that self-certainty distinguishes between correct and incorrect reasoning more effectively than negative perplexity.}
    \label{fig:example0}
\end{figure*}

\begin{figure*}[t]
    \centering
    \includegraphics[width=0.95\linewidth]{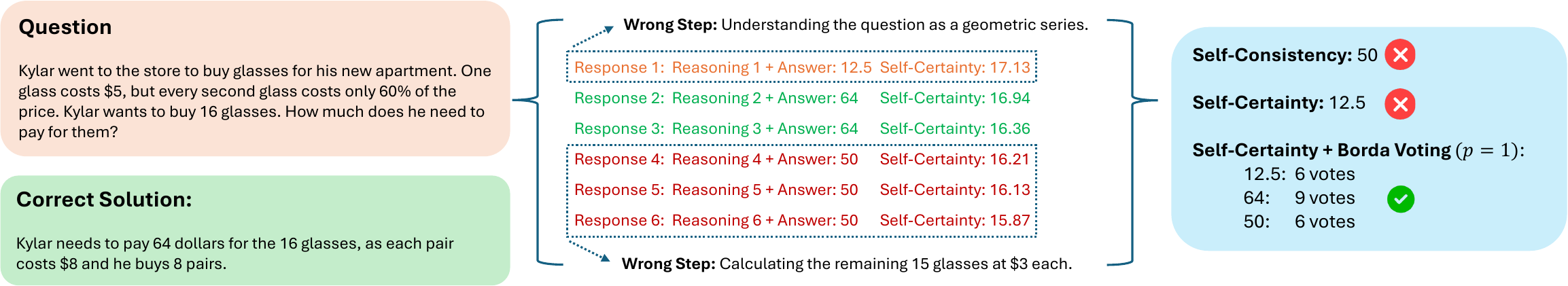}
    \caption{Example of Borda Voting correctly identifying the answer when confidence-driven selection and self-consistency fail. The figure illustrates how Borda Voting aggregates confidence scores and ranks to select the correct answer.}
    \label{fig:example1}
\end{figure*}
While self-certainty demonstrates greater robustness than alternative confidence measures, it remains vulnerable to distortion from samples with artificially high confidence scores. Our analysis reveals that self-certainty-driven Best-of-N selection underperforms compared to self-consistency in accuracy on mathematical datasets with definitive answers when using identical N values (Table~\ref{tab:results}). This does not, however, indicate inherent inferiority. Self-consistency operates at the response layer of LLMs, while self-certainty aggregates information at the decoding layer. By integrating both layers, we can extract more reliable responses from multiple outputs with explicit answers.

Traditional methods of combining majority voting with score-based selection, such as summing scores across samples with identical answers, suffer from sensitivity to score scaling. Similarly, using average confidence may inadequately represent frequently sampled answers. To address these limitations, we propose a Borda count-inspired approach:

First, we rank \(N\) outputs of models by confidence, obtaining a ranking \([r_1, r_2, \dots, r_N]\). We then assign votes to these ranked outputs using the following formula:
\begin{equation} \label{eq:borda_vote}
    v(r) = (N - r + 1)^p
\end{equation}
where \(r\) is the rank of the output (\(1 \le r \le N\)). Each valid response contributes votes to its final answer proportional to its rank. The answer accumulating the highest vote total becomes the consensus selection. When \(p = 0\), Equation~\eqref{eq:borda_vote} reduces to simple majority voting. As \(p\) approaches infinity, the highest-ranked output dominates, reverting to pure distributional confidence selection.

Figure~\ref{fig:example1} illustrates how Borda Voting successfully identifies the correct answer by integrating both confidence ranking and answer frequency, thereby overcoming limitations of both confidence-driven selection and self-consistency. The parameter \(p\), which controls ranking influence, serves as a tunable hyperparameter discussed in Section \ref{sec:exp_voting}.

\section{Experiment Setup}
We compare various confidence measures for selecting reliable reasoning responses, extending evaluation to additional datasets and exploring self-certainty with voting methods.

\begin{wrapfigure}{r}{0.45\textwidth}
    \centering
    \vspace{-4em}
    \includegraphics[width=0.99\linewidth]{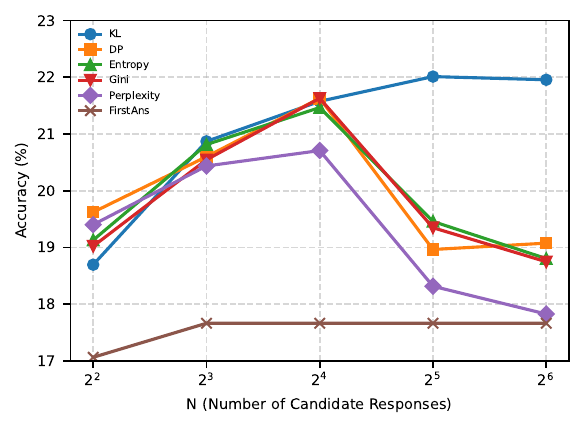}
    \vspace{-1.5em}
    \caption{Best-of-\(N\) selection accuracy on LiveBench-MATH across multiple confidence measures. KL achieves the best performance at larger \(N\), while other measures plateau or decline after \(N = 16\).} 
    \vspace{-3em}
    \label{fig:expression_selection}
\end{wrapfigure}

\subsection{Comparison of Confidence Measures} \label{sec:experiment_setup}  
To evaluate confidence formulations from Section~\ref{sec:candidate}, we select the most confident response from \(N\) outputs generated by Llama-3.1-8B-Instruct \citep{dubey2024llama}. We use LiveBench-Math dataset \citep{white2024livebench}, released post-model deployment, to mitigate potential data contamination.

We sample 64 responses (temperature=0.6, top-p=0.9) and create subsets of \(N = 4,8,16,32,64\) for Best-of-N selection. All measures are evaluated on identical sample sets. Responses without extractable answers are masked. We include a FirstAns baseline that selects the first extractable answer from \(N\) outputs. Evaluation uses the ZeroEval framework \citep{zeroeval}, with results averaged across five repetitions. All experiments are run on NVIDIA A100 GPUs.

\subsection{Validation on Additional Datasets and Combined Voting Methods} \label{sec:borda_setup}
We evaluate self-certainty and Borda Voting against self-consistency, universal self-consistency (USC), greedy decoding, and FirstAns across diverse reasoning tasks. We also compare to an outcome reward model GRM-Llama3.2-3B-RewardModel-FT \citep{ray2333_2024_grm_llama32_3b_rm_ft} and a process reward model (Qwen2.5-Math-PRM-7B \citep{zhang2025prm_lessons} on selected benchmarks to highlight differences between internal selection and external selection, and to quantify the resulting accuracy gaps.

The sampling strategy follows the procedures outlined in Section~\ref{sec:experiment_setup}. For USC, we use the template from the original paper \citep{chen2023universal} (with minor wording modifications, as shown in Appendix \ref{sec:usc_example}). To ensure a fair comparison, we assist USC in selecting the first valid response when it fails to choose one with an extractable answer. 

We evaluate different methods using the Llama-3.1-8B-Instruct across the following benchmarks:
\begin{itemize}[leftmargin=*, itemsep=0pt, topsep=0pt]
    \item \textbf{Mathematical Reasoning}: We utilize the LiveBench-Math dataset \citep{white2024livebench}, the validation set of GSM8K dataset \citep{gsm} and the test set of MATH dataset \citep{MATH}.
    \item \textbf{Code Reasoning}: The CRUXEval-O benchmark \citep{gu2024cruxeval} is employed, which involves predicting the output of Python codes.
    \item \textbf{Code Generation}: We adopt the LiveCodeBench code generation benchmark \citep{jain2024livecodebench} to assess the improvements introduced by our methods. Note that this is an open-ended task where self-consistency cannot be applied.
\end{itemize}

For all test models and datasets, we employ Chain-of-Thought reasoning \citep{wei2022chain}, except for the code generation dataset. To evaluate the generalization of our measure across different training methodologies, particularly for the recent R1-series large reasoning models \citep{guo2025deepseek}, we test our approach on DeepSeek-R1-Distill-Llama-8B using the MATH dataset (Level 3). Given the increased reasoning time required by this model, we conduct a single trial for this experiment. To further validate and assess generalizability, we apply both USC and self-certainty to the Qwen-2.5-Coder-32B-Instruct model \citep{hui2024qwen2}, in addition to Llama-3.1-8B-Instruct, on the LiveCodeBench dataset.

\section{Results and Analysis}\label{sec:result}

\subsection{Self-Certainty}

\begin{figure*}[t]
    \centering
    \includegraphics[width=\linewidth]{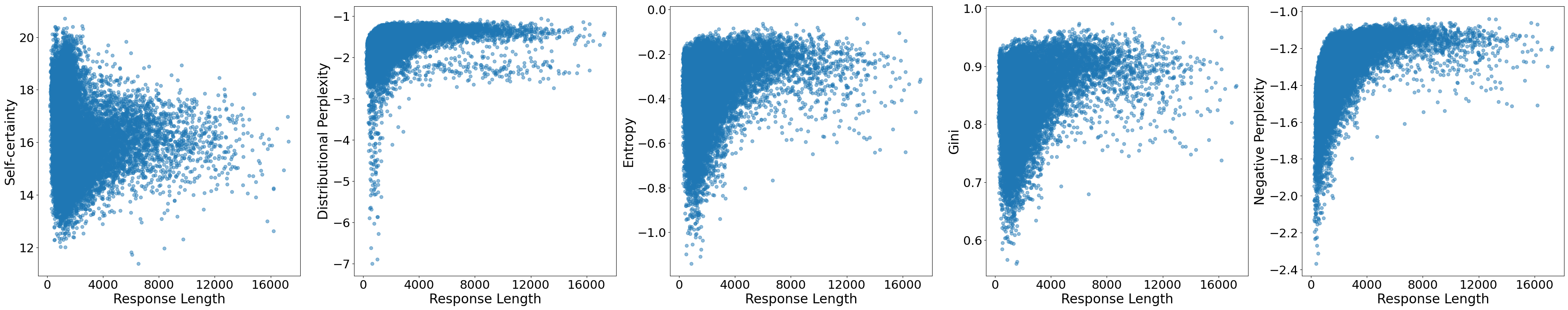}
    \caption{Scatter plot showing various confidence measures against response length (measured in number of characters) in the LiveBench-Math dataset, using the Llama-3.1-8B-Instruct model with 64 samples per question. The figure demonstrates that, with the exception of self-certainty, all other measures exhibit a bias towards longer responses.} 
    \label{fig:scatter}
\end{figure*}

\paragraph{KL-Divergence-Inspired Distributional Confidence Outperforms Other Measures in Best-of-N Selection.}

Figure \ref{fig:expression_selection} shows distributional confidence measures outperform perplexity when \(N \geq 16\). KL divergence uniquely continues improving as N increases to 32 and 64, demonstrating its robustness as a confidence measure with superior insight into response accuracy. Self-certainty, defined in Equation~\ref{eq:sc} as KL divergence from a uniform distribution, generalizes better than alternative empirical distributions (evaluated in Appendix~\ref{sec:diff_dis}), confirming the efficacy of our original design.

\paragraph{Self-Certainty's Robustness to Reasoning Length in Response Selection.}
Figure~\ref{fig:scatter} reveals a critical insight: while most confidence measures show positive correlation with response length, self-certainty remains largely invariant to reasoning length. This confirms \cite{basu2020mirostat}'s observation that perplexity decreases with increasing output length under low \(p\) values. Unlike metrics that potentially conflate verbosity with correctness, self-certainty provides an unbiased assessment of response quality, preventing models from artificially inflating confidence through extended but potentially meaningless reasoning.

\paragraph{Self-Certainty Effectively Separates Correct and Incorrect Responses.}
Analysis of self-certainty and negative perplexity distributions across correct, incorrect, and no-answer responses on MATH dataset Level 4 (Figure~\ref{fig:freq}) demonstrates self-certainty's superior discriminative power. For self-certainty, the distributions of correct and incorrect responses are centered around distinct means, with frequencies tapering off smoothly toward both tails. In contrast, perplexity fails to distinguish between correct and incorrect responses when applied to the full dataset, despite performing adequately at small \(N\) values (Figure~\ref{fig:expression_selection}). This aligns with \cite{zhang2020trading}'s finding that response quality initially improves as perplexity declines but subsequently deteriorates significantly. Notably, perplexity assigns higher confidence to no-answer responses—often resulting from self-repetition or early stopping—while self-certainty reliably assigns these responses lower confidence scores. This behavior is consistent with \cite{basu2020mirostat}'s observation that maximizing perplexity increases self-repetition. These findings provide compelling evidence that self-certainty more effectively measures model certainty by correlating more closely with response quality.

\begin{table*}[t]
    \centering
    \caption{Performance comparison of various methods across different datasets using Llama-3.1-8B-Instruct. 
    Some USC results are omitted due to over 20\% of the data exceeding context window limits under the settings. Self-certainty consistently outperforms sampling, greedy decoding, and perplexity, while Borda Voting with the optimal parameter \(p\) delivers the best performance across all methods. 
    }
    \vspace{0.5em}
    \label{tab:results}
    \footnotesize
\resizebox{0.98\textwidth}{!}{
\begin{tabular}{lcc  cc  cc  cc c}
\toprule
\multirow{2}{*}{\textbf{Method}}  & \multicolumn{2}{c}{\textbf{LiveBench-Math}} & \multicolumn{2}{c}{\textbf{GSM8K}} & \multicolumn{2}{c}{\textbf{MATH}} & \multicolumn{2}{c}{\textbf{CRUXEval-O}} & \multirow{2}{*}{\textbf{Avg.}} \\ 
& $N=8$ & $N=32$ & $N=8$ & $N=64$ & $N=8$ & $N=64$ & $N=8$ & $N=64$ &  \\
\midrule
Greedy            & \multicolumn{2}{c}{12.23} & \multicolumn{2}{c}{84.00} & \multicolumn{2}{c}{47.96} & \multicolumn{2}{c}{39.88} & 46.02\\
FirstAns          & 17.66 & 17.66 & 82.08 & 82.08 & 49.08 & 49.09 & 42.93 & 42.93 & 47.94 \\
\midrule
PRM                  & / & / & 93.48 & 95.15 & / & / & 47.53 &  48.61  &/ \\
ORM                  & / & / & 88.57 & 89.91 & / & / & 42.00 & 39.62 & / \\
\midrule
Perplexity          & 20.44 & 18.32 & 87.01 & 87.81 & 53.34 & 51.96 & 44.67 & 45.10 & 51.08\\
USC                  & 21.08 &  -    & 87.32 & 85.65 & 54.66 & - &  43.78 & 41.25 & 51.19\\
Self-consistency     & 22.50 & 26.25 & 89.42 & 90.99 & 58.60 & 63.40 & 47.58 & 50.42 & 56.15 \\
\midrule            
Self-certainty       & 20.87 & 22.01 & 87.32 & 88.90 & 54.63 & 56.70 & 45.38 & 45.83 & 52.71 \\
- Borda ($p=0.3$)    & \textbf{23.69} & 26.47 & \textbf{89.57} & \textbf{91.07} & \textbf{59.04} & 63.60 & \textbf{47.94} & 50.42 & 56.48\\
- Borda ($p=0.7$)    & 23.59 & 26.36 & 89.51 & 91.04 & \textbf{59.04} & 63.85 & 47.85 & 50.65 & 56.49  \\
- Borda ($p=1.2$)    & 23.21 & \textbf{26.69} & 89.51 & 90.95 & 58.86 & \textbf{64.10} & 47.93 & 50.85 & \textbf{56.51} \\
- Borda ($p=2.0$)      & 22.45 & 26.41 & 89.13 & 90.90 & 57.94 & 60.02 & 47.25 & \textbf{51.23} & 55.67 \\ 

\bottomrule
\end{tabular}
}
\end{table*}

\begin{figure*}[t]
    \centering
    \includegraphics[width=0.99\linewidth]{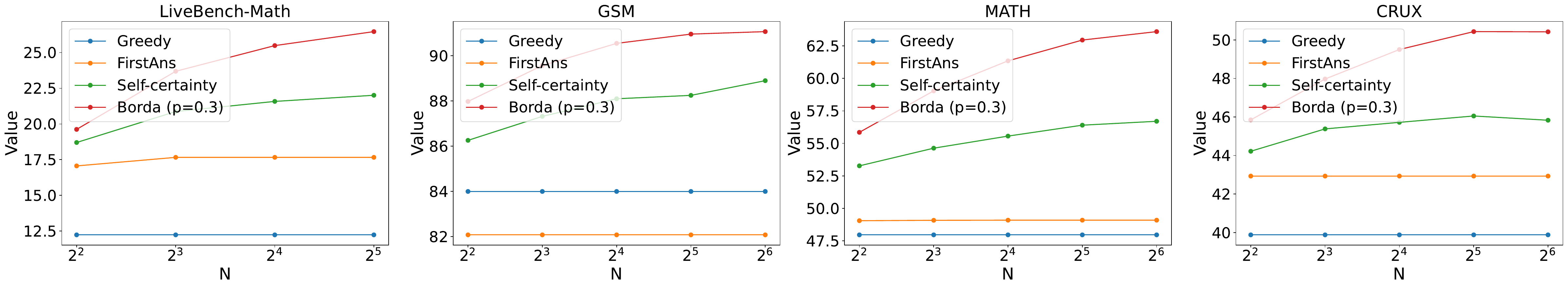}
    \caption{Performance evaluation across four datasets using different strategies with Llama-3.1-8B-Instruct. The lines show strong scaling ability of both self-certainty and Borda voting.}
    \label{fig:four_datasets}
\end{figure*}

\subsection{Self-Certainty and Voting}\label{sec:exp_voting}
\paragraph{Borda Voting in Combination with Self-Certainty.}
As discussed in Section \ref{sec:vote}, self-certainty can be integrated with voting methods to enhance accuracy when responses contain explicit answers. Table~\ref{tab:voting} demonstrates that self-certainty-based Borda voting outperforms majority voting, average self-certainty, and sum self-certainty on the MATH dataset.

\begin{wraptable}{r}{0.45\textwidth}
\centering
\vspace{-1.5em}
\caption{Accuracy of different voting methods on the test set of MATH dataset using Llama-3.1-8B-Instruct. Self-certainty-based Borda voting outperforms other voting methods.}
\vspace{0.5em}
\label{tab:voting}
\begin{tabular}{lcc}
\toprule
\textbf{Method} & \textbf{\(\boldsymbol{N = 8}\)} & \textbf{\(\boldsymbol{N = 64}\)} \\ 
\midrule
Majority & 58.60 & 63.40  \\
Average             & 46.92 & 32.94\\
Sum                 &  59.06 & 63.51 \\
Borda (\(p=0.5\))   & \textbf{59.08}  & 63.71  \\
Borda (\(p=1.2\))   & 58.86  & \textbf{64.10}  \\
\bottomrule
\end{tabular}
\vspace{-1em}
\end{wraptable}

\paragraph{Performance Comparison Across Four Datasets.}
Figure~\ref{fig:four_datasets} illustrates the scaling properties of self-certainty and self-certainty-based Borda voting. Self-certainty significantly outperforms sampling, greedy decoding, and perplexity-based selection, with performance improving as $N$ increases. This confirms that self-certainty effectively measures the model's confidence in its responses, providing valuable insight into output correctness.
Furthermore, Borda voting consistently outperforms self-consistency across various settings of \(p\) and \(N\) on all four datasets, indicating that self-certainty enhances final-answer-based voting by providing effective ranking information.

\paragraph{Optimizing the Borda Parameter \(p\) for Different \(N\).}
Figure~\ref{fig:math_p} shows the relationship between the Borda parameter \(p\) in Equation~\ref{eq:borda_vote} and selection efficiency across varying sample sizes \(N\). The optimal \(p\) increases from 0.5 to 1.2 as \(N\) increases from 8 to 64, suggesting that larger sample sizes require stronger control from self-certainty. For practical applications, grid search remains the most effective approach for determining the optimal $p$, though a simple heuristic is to use \(p = 0.3\) when \(N\leq 16\) and \( p = 1.2\) when \(N\geq 32\), with the understanding that the optimum is model- and task-dependent. Tuning \(p\) on a small portion of the task-specific validation set is recommended to achieve the best performance, as convergence of optimal \(p\) typically occurs within a few hundred evaluation queries.

\begin{figure}[t]
    \centering
    \begin{minipage}{0.49\textwidth}
        \centering
        \includegraphics[width=\textwidth]{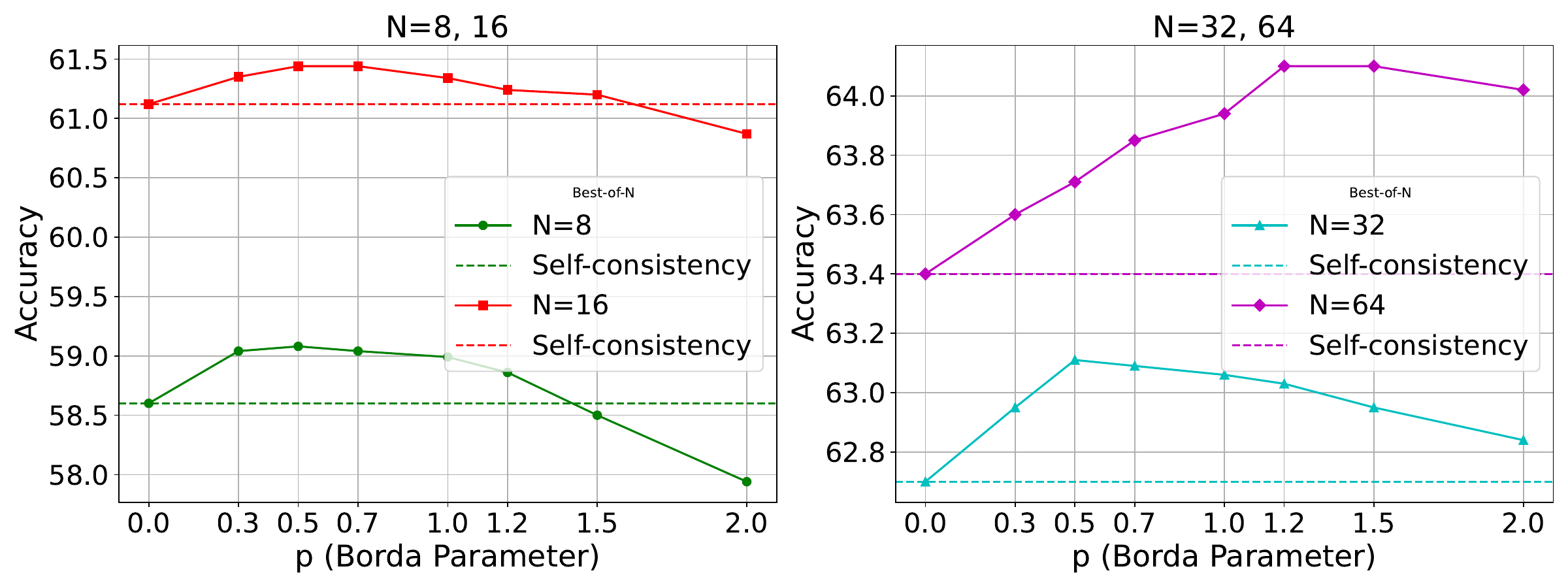}
        \vspace{-1em}
        \caption{Performance of Borda voting on MATH dataset using Llama-3.1-8B-Instruct with varying \( p \) and \( N \). Accuracy initially increases with $p$, peaks, then declines. The optimal \( p \) varies with \( N \). Note that self-consistency corresponds to Borda voting with \( p = 0 \).}
        \label{fig:math_p}
    \end{minipage}
    \hfill
    \begin{minipage}{0.49\textwidth}
        \centering
        \includegraphics[width=\textwidth]{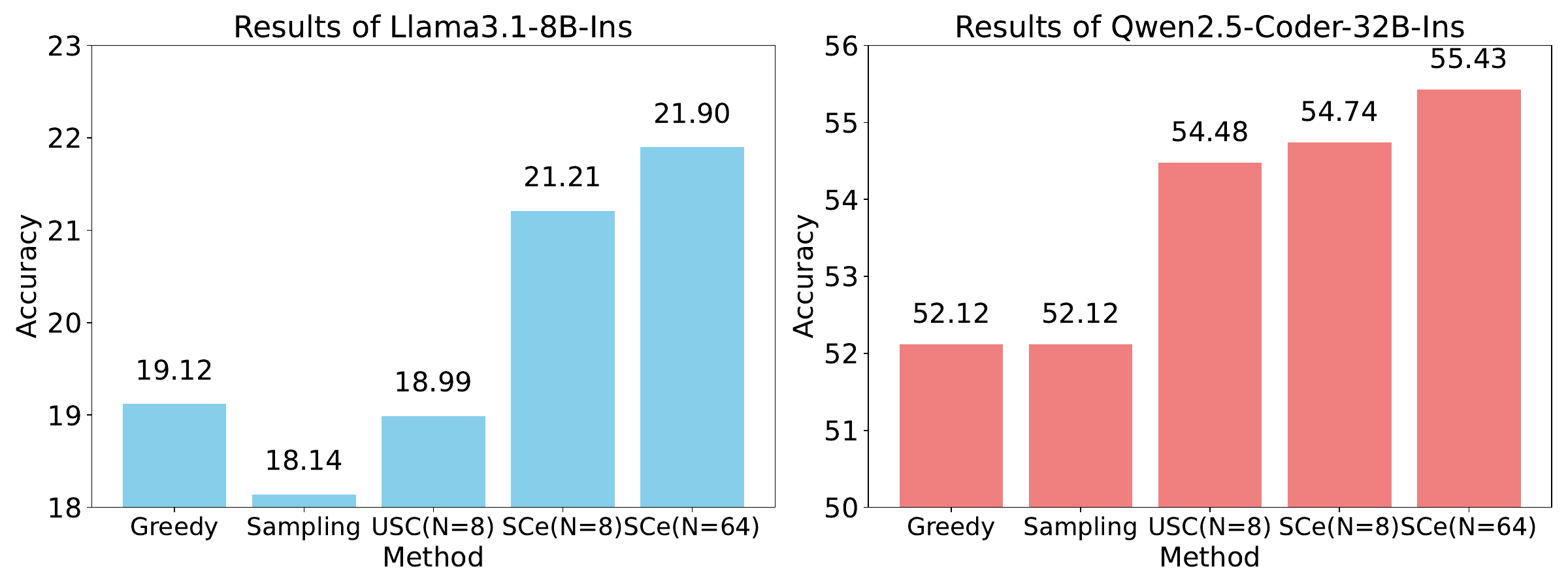}
        \caption{Comparison of self-certainty and USC on the LiveCodeBench code generation task. The results show that self-certainty outperforms USC and greedy decoding on both Llama-3.1-8B-Instruct and Qwen-2.5-Coder-32B-Ins models, with performance improving as \(N\) increases.}
        \label{fig:livecode}
    \end{minipage}
\end{figure}

\subsection{Generalization}

\paragraph{Generalization of Self-Certainty on Open-Ended Generation Tasks.}
Self-consistency faces limitations with creative, open-ended tasks like code generation, where each sample produces unique answers, defaulting to standard sampling. Both USC and our self-certainty method address this limitation. Comparing these approaches on LiveCodeBench (Figure~\ref{fig:livecode}), we find that USC underperforms greedy decoding on Llama-3.1-8B-Instruct, likely due to limited consistency recognition capabilities. This is confirmed by results from the larger Qwen model, where USC successfully outperforms greedy decoding. In contrast, self-certainty consistently outperforms greedy decoding across both models and surpasses USC on Qwen-2.5-Coder-32B-Ins, with performance scaling positively with sample size \(N\).

\vspace{-3em}
\begin{wraptable}{r}{0.5\textwidth}
    \vspace{-2em}
    \centering
    \caption{Accuracy of various methods on the Level 3 test set of the MATH dataset using DeepSeek-R1-Distill-Llama-8B (single trial). Self-certainty outperforms Greedy and FirstAns, while Borda Voting with an appropriate \(p\) surpasses self-consistency.}
    \vspace{0.5em}
    \label{tab:distill}
    \resizebox{0.98\linewidth}{!}{
    \begin{tabular}{lccc}
    \toprule
    \textbf{Method} & \textbf{\(\boldsymbol{N = 4}\)} & \textbf{\(\boldsymbol{N = 16} \)} & \textbf{\(\boldsymbol{N = 64}\)} \\
    \midrule
    Greedy    & 77.54  &  77.54 & 77.54 \\
    FirstAns            & 81.17    & 81.43 & 81.43\\
    \midrule
    Self-consistency    & 83.64    & 86.47 & 87.62\\
    \midrule
    Self-certainty      & 83.29 & 83.73 & 84.08  \\
    - Borda ($p=0.3$)   & 84.79  & 87.00 & 87.80  \\
    - Borda ($p=0.7$)   & 84.70 & 86.91  & 87.62\\
    - Borda ($p=1.2$)   & 84.62  & 87.00 & 88.06\\
    - Borda ($p=2.0$)   & 83.29  & 87.00 & 87.98\\
    \bottomrule
    \end{tabular}
    }
    \vspace{-2em}
\end{wraptable}

\vspace{3em}
\paragraph{Generalization of Self-Certainty on Reasoning Models.}
Recent work on DeepSeek-R1 \citep{guo2025deepseek} shows that reinforcement learning with verifiable rewards and long-chain-of-thought (CoT) significantly enhance LLM reasoning capabilities. Our evaluation of self-certainty on DeepSeek-R1-Distill-Llama-8B (Table~\ref{tab:distill}) demonstrates that it consistently outperforms both greedy decoding and sampling, with performance scaling with \(N\). Additionally, Borda voting with self-certainty surpasses self-consistency when using appropriate \(p\) values. These results confirm the robustness of our methods across various fine-tuning approaches.

\section{Discussion and Future Research}

Our study establishes self-certainty as a scalable, lightweight, and effective metric for evaluating LLM outputs, particularly for open-ended and complex reasoning tasks. While it scales well with increasing sample size and outperforms existing reward-free methods across multiple settings, several directions for refinement remain.

First, self-certainty can underperform self-consistency on problems with definitive, convergent answers (Section~\ref{sec:result}). This reflects the complementary nature of different aggregation methods rather than a limitation. Combining self-certainty with answer-level voting mechanisms—such as Borda voting—bridges this performance gap, achieving results that rival or exceed self-consistency. These findings suggest that self-certainty could enhance reward model design by shifting from token-level scoring to distribution-aware confidence estimation. The use of KL divergence from a uniform distribution offers greater robustness than traditional average log-probability metrics and may lead to more stable reward training objectives. In practical terms, the metric can substitute for selected-token probabilities in applications such as soft self-consistency \citep{wang-etal-2024-soft}, with potential performance gains.

Second, our implementation uses a simple averaging strategy for aggregating token-level confidence (Equation~\ref{eq:DC}) and a basic power function for distributing votes in Borda voting (Equation~\ref{eq:borda_vote}). Future work should explore more sophisticated aggregation functions or data-driven approaches for learning optimal vote weighting schemes to improve accuracy in specialized applications.

Self-certainty also enables broader research opportunities. Its computational efficiency makes it ideal for test-time optimization techniques \citep{snell2024scaling}, producing higher-quality outputs without additional inference passes. It offers potential value in data filtering, auto-labeling, and reinforcement learning pipelines \citep{bai2022training,ouyang2022training}, where confidence estimation is crucial. Specifically, self-certainty could guide reward shaping or provide intrinsic signals for autonomous agents, better aligning learning objectives with model certainty.

\section{Conclusion}
In this paper, we introduce self-certainty and self-certainty-based Borda voting as novel approaches for evaluating and enhancing model response performance. Self-certainty functions as an internal measure of response quality, demonstrating robustness in several key aspects. Compared to traditional scoring methods, such as average log probability and perplexity, it offers superior scalability when applied to Best-of-N selection. Additionally, the ranking information provided by self-certainty improves chain-of-thought reasoning and outperforms universal self-consistency (USC) in code generation tasks. Its stability, flexibility, and generalizability make it applicable across a wide range of domains, with the potential to enhance the autonomous learning capabilities of LLMs. 

\bibliography{example_paper}
\bibliographystyle{unsrtnat}


\appendix
\newpage
\section{Theoretical Analysis}
\subsection{Log-likelihood Ascent of Target Token Increases Self-Certainty}
\label{sec:self-cert-appendix}

\paragraph{Setup and definition.}
For a single decoding step, let $p=(p_1,\ldots,p_V)\in\Delta^{V-1}$ denote the next-token distribution over a vocabulary of size $V$ with logits $z\in\mathbb{R}^V$ and $p=\mathrm{softmax}(z)$. The token-wise self-certainty is
\[
\mathrm{SC}(p)\;=-\frac{1}{V}\sum_{j=1}^V \log\!\big(V\,p_j\big),
\]

\paragraph{Differentiate Self-Certainty.}
Let $\delta_{jk}$ be the Kronecker delta. Two standard derivatives are
\[
\frac{\partial p_j}{\partial z_k}=p_j(\delta_{jk}-p_k),
\qquad
\frac{\partial\,\mathrm{SC}}{\partial p_j}= -\frac{1}{V}\cdot\frac{1}{p_j}.
\]
By the chain rule,
\[
\frac{\partial\,\mathrm{SC}}{\partial z_k}
=\sum_{j=1}^V \frac{\partial\,\mathrm{SC}}{\partial p_j}\frac{\partial p_j}{\partial z_k}
= -\frac{1}{V}\sum_{j=1}^V \frac{1}{p_j}\,p_j(\delta_{jk}-p_k)
= -\frac{1}{V}\,(1-Vp_k)
= p_k-\frac{1}{V}.
\tag{1}
\label{eq:grad-sc-z}
\]

\paragraph{Update that Increases a Specific Token’s Probability.}
Fix a target token index $y^*$. A gradient-\emph{ascent} step on $\log p_{y^*}$ moves logits along
\[
\Delta z_k \;\propto\; \frac{\partial \log p_{y^*}}{\partial z_k}
\;=\; \delta_{k,y^*}-p_k,
\tag{2}
\label{eq:grad-logpy}
\]
so for step size $\eta>0$ we consider $z(\eta)=z+\eta\,\Delta z$ and $p(\eta)=\mathrm{softmax}\!\big(z(\eta)\big)$.

\paragraph{Directional Change of Self-Certainty.}
Using \eqref{eq:grad-sc-z} \eqref{eq:grad-logpy}, the directional derivative of $\mathrm{SC}$ at $\eta=0$ is
\[
\left.\frac{d}{d\eta}\mathrm{SC}\big(p(\eta)\big)\right|_{\eta=0}
=\sum_{k=1}^V \frac{\partial\,\mathrm{SC}}{\partial z_k}\,\Delta z_k
=\sum_{k=1}^V\Big(p_k-\frac{1}{V}\Big)\big(\delta_{k,y^*}-p_k\big)
= p_{y^*}-\sum_{k=1}^V p_k^2.
\tag{3}
\label{eq:dir-deriv}
\]
Let $\|p\|_2^2\coloneqq \sum_k p_k^2$. Then \eqref{eq:dir-deriv} gives the exact criterion
\[
\;
\left.\frac{d}{d\eta}\mathrm{SC}\big(p(\eta)\big)\right|_{\eta=0} > 0
\iff\;
p_{y^*} \;>\; \|p\|_2^2.
\;
\tag{4}
\label{eq:iff}
\]

\begin{theorem}[When self-certainty increases under log-likelihood ascent]
\label{thm:increase}
Under the self-certainty $\mathrm{SC}(p)=\mathrm{KL}(U\|p)$, a (stochastic) gradient-ascent step on $\log p_{y^*}$ increases the token-position’s self-certainty to first order in the step size if and only if $p_{y^*}>\|p\|_2^2$.
\end{theorem}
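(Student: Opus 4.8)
The statement follows immediately from a single directional-derivative computation, so the plan is to assemble that computation in three short moves and then read off the sign condition.

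First I would differentiate the self-certainty functional through the softmax. Writing $\mathrm{SC}(p)=\mathrm{KL}(U\parallel p)=-\frac{1}{V}\sum_j\log(Vp_j)$, one has $\partial\mathrm{SC}/\partial p_j=-1/(Vp_j)$; composing with the standard softmax Jacobian $\partial p_j/\partial z_k=p_j(\delta_{jk}-p_k)$ via the chain rule cancels the $1/p_j$ factors and leaves $\partial\mathrm{SC}/\partial z_k=p_k-1/V$, which is Eq.~(1). Second, I would record the ascent direction: the gradient of $\log p_{y^*}$ for the fixed target index $y^*$ with respect to the logits is the familiar cross-entropy gradient $\partial\log p_{y^*}/\partial z_k=\delta_{k,y^*}-p_k$, so the update is $\Delta z_k=\eta\,c\,(\delta_{k,y^*}-p_k)$ for some positive proportionality constant $c$ and step size $\eta>0$, which is Eq.~(2).

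Third, I would contract these two vectors to obtain the directional derivative $\frac{d}{d\eta}\mathrm{SC}(p(\eta))\big|_{\eta=0}=\sum_k(p_k-1/V)(\delta_{k,y^*}-p_k)$ and expand it: the $\delta_{k,y^*}$ part contributes $p_{y^*}-1/V$, the $-p_k$ part contributes $-(\|p\|_2^2-1/V)$, the two $1/V$ terms cancel, and the result is exactly $p_{y^*}-\|p\|_2^2$ (Eq.~(3)). Finally, a first-order Taylor expansion gives $\mathrm{SC}(p(\eta))=\mathrm{SC}(p)+\eta\,c\,(p_{y^*}-\|p\|_2^2)+o(\eta)$ with $c>0$, so for all sufficiently small $\eta>0$ the token-position's self-certainty strictly increases precisely when $p_{y^*}>\|p\|_2^2$ and strictly decreases when the inequality is reversed; this is Eq.~(4) and the claimed equivalence. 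For the stochastic phrasing, the same per-token identity holds for whichever target token is drawn, and the direction is preserved under taking expectations over the sampled token.

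There is no deep obstacle here; the only care needed is bookkeeping in the expansion — checking that the $1/V$ terms cancel and that $\|p\|_2^2=\sum_k p_k^2$ enters with the correct sign — and phrasing the ``to first order'' claim as the sign of a genuine Taylor coefficient via the $o(\eta)$ expansion rather than merely as the sign of a derivative. I would also add the remark that since $\|p\|_2^2\le\max_k p_k$, the criterion $p_{y^*}>\|p\|_2^2$ is automatically met (with equality only for a degenerate $p$) once $y^*$ is already the arg-max token, recovering the intuition stated in the main text that ascent on a token that already carries high probability raises self-certainty.
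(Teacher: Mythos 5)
Your proposal is correct and follows essentially the same route as the paper: the softmax Jacobian combined with $\partial\,\mathrm{SC}/\partial p_j=-1/(Vp_j)$ gives $\partial\,\mathrm{SC}/\partial z_k=p_k-1/V$, contracting with the ascent direction $\delta_{k,y^*}-p_k$ yields $p_{y^*}-\|p\|_2^2$, and the sign of this directional derivative is exactly the paper's criterion. Your added remarks (the $o(\eta)$ Taylor phrasing and the $\|p\|_2^2\le\max_k p_k$ observation) match the paper's Corollary~\ref{cor:argmax} and surrounding remarks, so there is nothing to correct.
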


\begin{proof}
Immediate from \eqref{eq:dir-deriv}–\eqref{eq:iff}.
\end{proof}

\begin{corollary}[Argmax case]
\label{cor:argmax}
If $y^*=\arg\max_k p_k$ and $p$ is not one-hot, then $p_{y^*}>\|p\|_2^2$ and thus the ascent step strictly increases $\mathrm{SC}$. Indeed, $\|p\|_2^2\le \max_k p_k$ with equality only when $p$ is one-hot.
\end{corollary}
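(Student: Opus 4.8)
The plan is to prove Corollary~\ref{cor:argmax} by reducing it to the exact criterion \eqref{eq:iff} already established for Theorem~\ref{thm:increase}, namely that the ascent step strictly increases $\mathrm{SC}$ if and only if $p_{y^*}>\|p\|_2^2$. So it suffices to show that when $y^*=\arg\max_k p_k$ and $p$ is not one-hot, we have the strict inequality $p_{y^*}>\|p\|_2^2$.

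The key observation is the elementary bound $\|p\|_2^2=\sum_k p_k^2\le (\max_k p_k)\sum_k p_k=\max_k p_k=p_{y^*}$, using $p_k^2\le (\max_j p_j)\,p_k$ termwise and $\sum_k p_k=1$ since $p\in\Delta^{V-1}$. This gives $p_{y^*}\ge\|p\|_2^2$ unconditionally; what remains is to argue the inequality is strict precisely when $p$ is not one-hot. Equality in the termwise bound $p_k^2\le p_{y^*}p_k$ holds iff for each $k$ either $p_k=0$ or $p_k=p_{y^*}$; combined with $\sum_k p_k=1$ this forces exactly one coordinate equal to $1$ and the rest $0$, i.e.\ $p$ is one-hot. (One can also note that since $p=\mathrm{softmax}(z)$ with finite logits, $p$ is never literally one-hot, so the hypothesis ``$p$ not one-hot'' is automatically satisfied in the softmax setting and the step always strictly increases $\mathrm{SC}$ at an argmax target; the corollary states it in the more general form for any $p\in\Delta^{V-1}$.)

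Concretely, the steps in order: (i) invoke \eqref{eq:iff} to translate the claim into the purely algebraic statement $p_{y^*}>\|p\|_2^2$; (ii) establish $\|p\|_2^2\le\max_k p_k$ via the termwise inequality $p_k^2\le(\max_j p_j)p_k$ summed against $\sum_k p_k=1$; (iii) characterize the equality case of that bound and show it coincides with $p$ being one-hot, yielding strictness under the hypothesis; (iv) chain the strict inequality through \eqref{eq:iff} to conclude the directional derivative is positive, hence $\mathrm{SC}$ strictly increases to first order.

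There is essentially no hard part here — the corollary is a one-line consequence of the theorem plus a standard $\ell_2$-vs-$\ell_\infty$ estimate on the simplex. The only point requiring a sentence of care is the equality analysis in step (iii): one must be precise that equality forces a single atom of mass $1$, rather than, say, two atoms of mass $1/2$ (which would violate $p_k\in\{0,p_{y^*}\}$ unless $p_{y^*}=1/2$, but then $\sum_k p_k=1$ needs a second such atom, and iterating shows the pattern is inconsistent with $p_{y^*}$ being the maximum unless all mass sits on one coordinate). Making that argument clean — ideally by just noting $\|p\|_2^2=\max_k p_k\iff p_k(p_{y^*}-p_k)=0\ \forall k\iff p$ one-hot — is the entire substance of the proof.
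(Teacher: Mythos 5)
Your overall route is the same as the paper's: reduce the corollary to the exact criterion \eqref{eq:iff} from Theorem~\ref{thm:increase}, then control $\|p\|_2^2$ by $\max_k p_k$ on the simplex. Steps (i), (ii) and (iv) are fine, but step (iii) — the equality analysis, which you yourself identify as the entire substance of the proof — is wrong. Equality in the termwise bound $p_k^2\le(\max_j p_j)\,p_k$ summed against $\sum_k p_k=1$ holds iff $p_k\in\{0,\max_j p_j\}$ for every $k$, and together with $\sum_k p_k=1$ this forces $p$ to be \emph{uniform on its support} (common value $1/m$ on $m$ atoms), not one-hot. Your parenthetical attempt to exclude two atoms of mass $1/2$ does not work: $p=(1/2,1/2,0,\dots,0)$ has $p_k\in\{0,p_{y^*}\}$ for all $k$, sums to one, and has $p_{y^*}=1/2$ as its maximum, yet $\|p\|_2^2=1/2=p_{y^*}$; so the proposed chain ``$p_k(p_{y^*}-p_k)=0$ for all $k$ iff $p$ is one-hot'' is false. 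Consequently the strict inequality $p_{y^*}>\|p\|_2^2$ genuinely fails for non-one-hot distributions that are uniform over $m\ge 2$ atoms, where the directional derivative \eqref{eq:dir-deriv} is exactly zero and the ascent step does not increase $\mathrm{SC}$ to first order.

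In fairness, the paper's own one-line justification (``equality only when $p$ is one-hot'') contains the same overstatement, so you have reproduced its argument faithfully — but the gap is real, and your softmax remark does not repair it: finite logits give full support, which excludes one-hot vectors but not the uniform distribution $p=U$, where again $\|p\|_2^2=1/V=\max_k p_k$ and the first-order change vanishes. The correct strengthened hypothesis is that $p$ is not uniform on its support (equivalently, in the softmax setting, $p\neq U$); under that hypothesis your steps (i), (ii), (iv) plus the corrected equality characterization give the strict increase claimed.
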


\paragraph{Remarks.}
(i) If $p_{y^*}$ starts below $\|p\|_2^2$, the first few ascent steps can decrease self-certainty. Once $p_{y^*}$ exceeds $\|p\|_2^2$, further ascent increases it by Theorem~\ref{thm:increase}. 
(ii) Sentence-level self-certainty inherits the same monotonicity condition token-wize.

\section{More Experiment Results}

\subsection{Oracle Best-of-N Selection Performance and Scaling Effects on LiveCodeBench}
In our experiment described in Section~\ref{sec:borda_setup}, we evaluate the performance of Llama-3.1-8B-Instruct and compare Borda voting and self-certainty against the upper bound of Best-of-N selection methods, as shown in Figure~\ref{fig:five_datasets_more}. While both methods demonstrate continued improvement as $N$ increases, they remain significantly outperformed by the Oracle selection method, which assumes perfect knowledge of the correct answer.
\begin{figure}[h]
    \centering
    \includegraphics[width=0.99\textwidth]{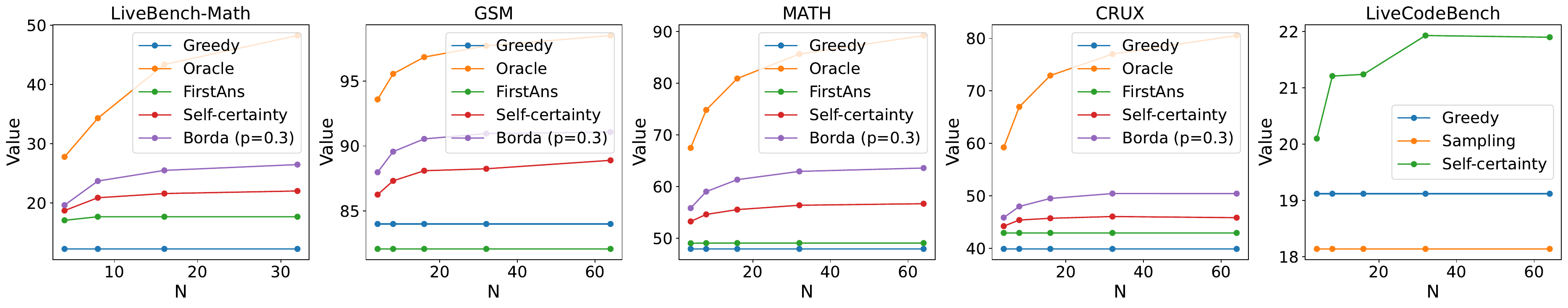}
    \caption{Performance across five datasets using different strategies with Llama-3.1-8B-Instruct. The oracle selection method significantly outperforms the other strategies. Additionally, both Borda voting and self-certainty demonstrate strong scaling effects.}
    \label{fig:five_datasets_more}
\end{figure}

\subsection{Average Self-Certainty Across Difficulty Levels on MATH Dataset}
To explore how self-certainty is influenced by question difficulty, we evaluate the average self-certainty score across different difficulty levels of the MATH dataset, as shown in Figure~\ref{fig:difficulty}. The results indicate that the average self-certainty generally decreases as the difficulty level increases, regardless the correctness of the questions. This trend makes self-certainty a promising parameter-free approach for assessing question difficulty, offering a potential alternative to training classifiers \citep{snell2024scaling} when determining difficulty levels for scaling test-time compute strategies.

\begin{figure}[ht]
    \centering
    \includegraphics[width=0.65\textwidth]{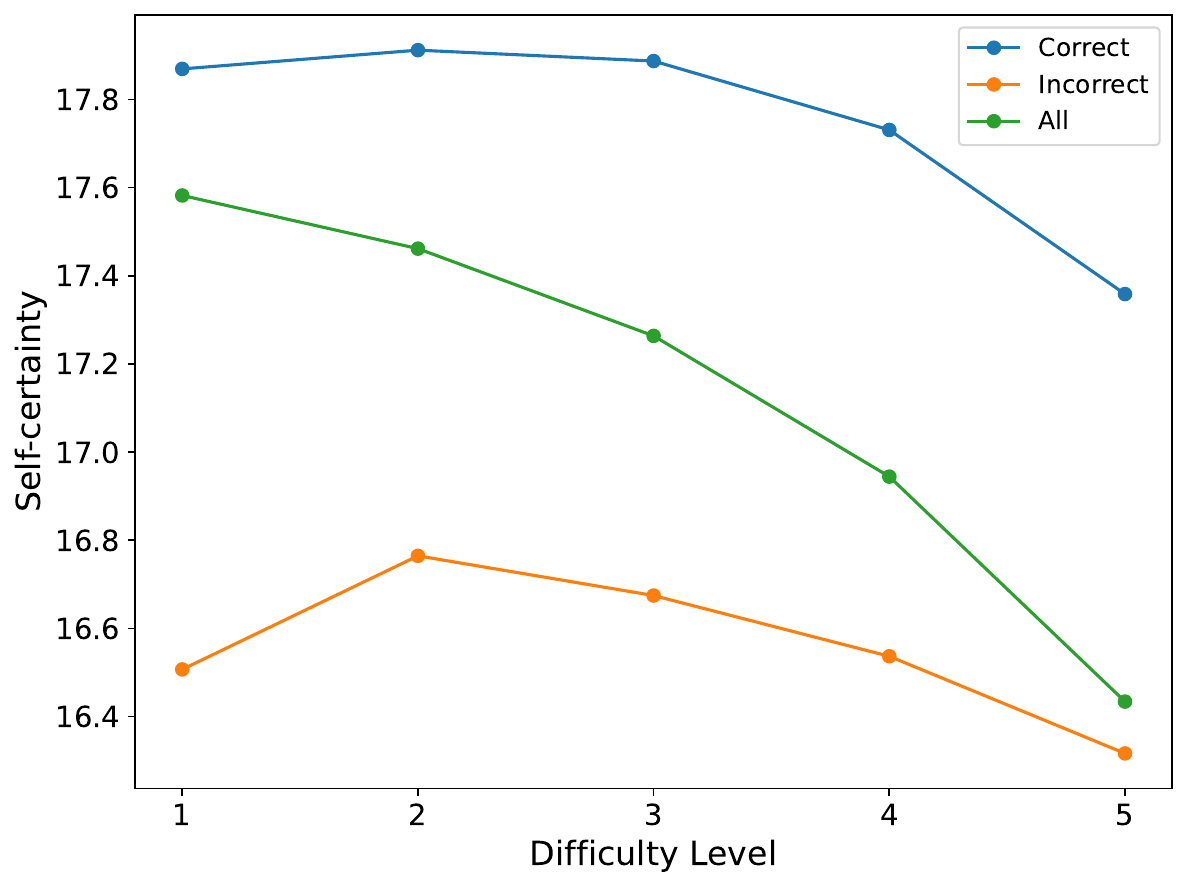}
    \caption{Comparison of the average self-certainty score on the MATH test dataset across increasing difficulty levels using Llama-3.1-8B-Instruct for 64 responses per question (single trial). The average self-certainty decreases as questions become more challenging. This trend is observed for both correct and incorrect responses.}
    \label{fig:difficulty}
\end{figure}

\subsection{Evaluation of Methods Across Difficulty Levels on the MATH Dataset}
We evaluate different methods across varying difficulty levels of reasoning problems. Figure~\ref{fig:level} presents the performance of various methods on the MATH dataset at different difficulty levels. As question difficulty increases, the scaling effect of Borda voting and self-certainty becomes more pronounced, demonstrating their effectiveness in handling more challenging reasoning tasks.

\begin{figure}[h]
    \centering
    \includegraphics[width=0.65\textwidth]{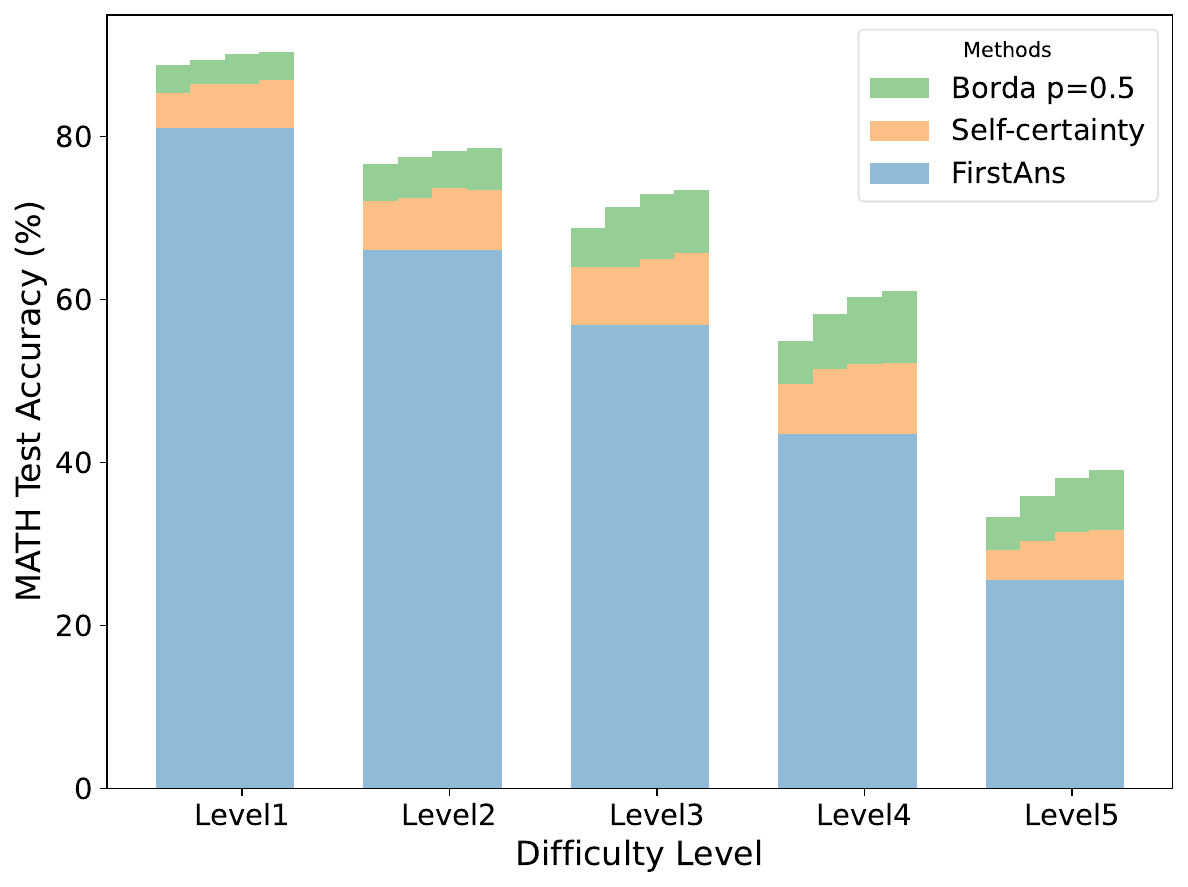}
    \caption{Comparison of evaluation methods on the MATH test dataset across increasing difficulty levels using Llama-3.1-8B-Instruct. The four bars in each difficulty bin correspond to an increasing choice of N in the Best-of-N selection (8, 16, 32, and 64 generations). Performance differences among settings become more pronounced as the difficulty level increases.}
    \label{fig:level}
\end{figure}

\subsection{Replacing Uniform Distribution with Empirical Distribution}\label{sec:diff_dis}

\begin{figure}[t]
    \centering
    \includegraphics[width=0.65\textwidth]{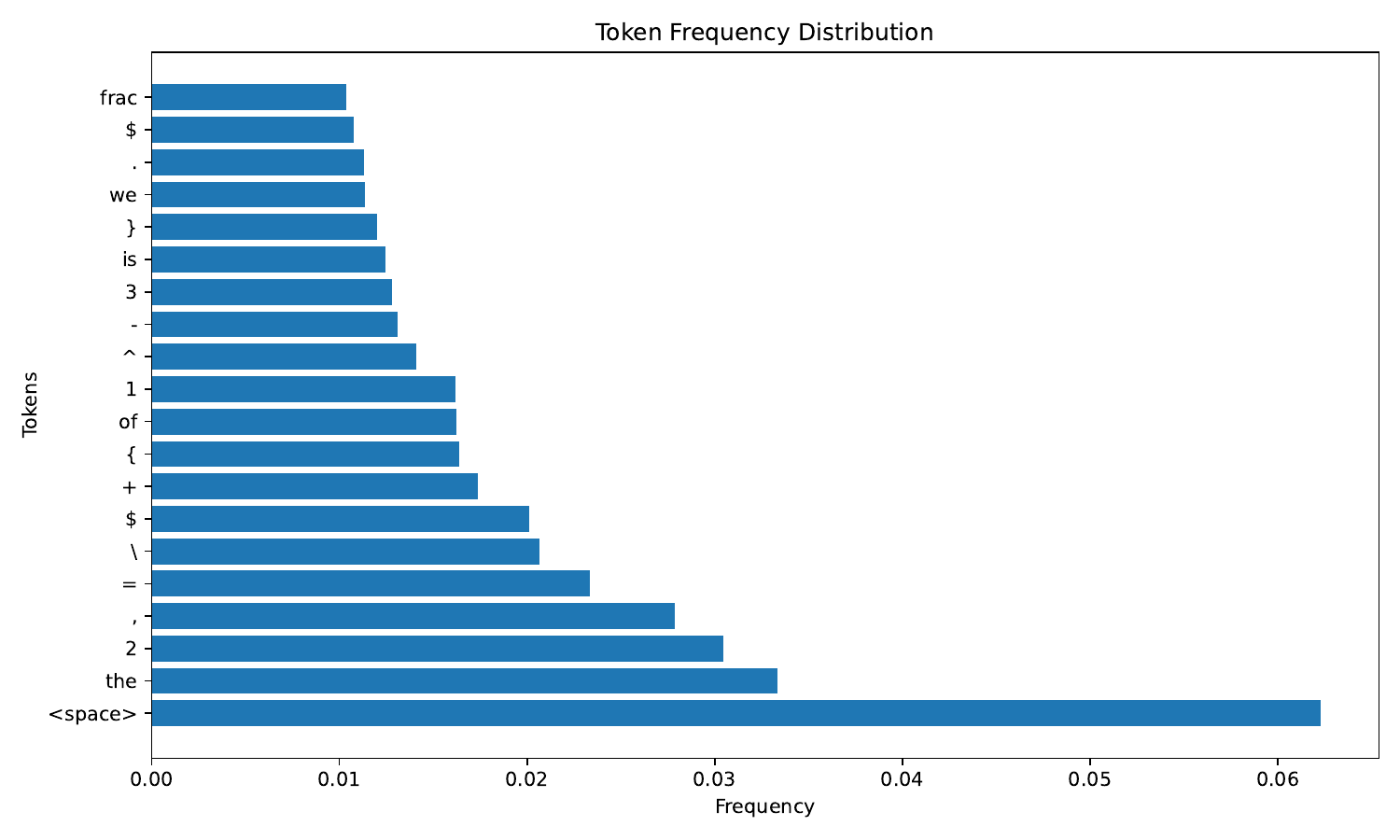}
    \caption{Frequency of the top 20 tokens in Llama-3.1-8B-Instruct responses to MATH training set questions (generated eight times per question).}
    \label{fig:token_freq}
\end{figure}

\begin{table}[ht]
\centering
\caption{Accuracy of various self-certainty definitions for Best-of-N selection on the MATH and GSM8K test sets using Llama-3.1-8B-Instruct (single trial). The empirical distribution is derived by sampling from the MATH training dataset. While the empirical self-certainty results are comparable to those based on a uniform distribution for the MATH test set, it is significantly outperformed by the latter, likely due to a distributional shift.}
\label{tab:emp}
\begin{tabular}{@{}lccccc@{}}
\toprule
\multirow{2}{*}{\textbf{Base Distribution}} & \multicolumn{2}{c}{\textbf{MATH}} & \multicolumn{2}{c}{\textbf{GSM8K}} \\
& \(N = 8\) & \(N = 64\) & \(N = 8\) & \(N = 64\) \\
\midrule
Uniform   & 54.60   & 56.46 & 87.19 & 88.55\\
Empirical      & 54.70 & 56.78 & 85.97 & 86.35  \\
\bottomrule
\end{tabular}
\end{table}
In Equation~\ref{eq:sc}, we define tokenwise self-certainty as the KL divergence between the generated token distribution and a uniform distribution, which quantifies deviation from random sampling. An alternative approach replaces the uniform distribution with an empirical token distribution estimated from training data. To evaluate the impact of this modification, we conduct the following experiment.

We first estimate token frequencies in the MATH training set by generating eight responses per question and averaging token occurrences. The resulting empirical distribution is approximated from these frequencies, with the 20 most frequent tokens shown in Figure~\ref{fig:token_freq}. We then compute KL divergence between the generated token distribution and the empirical distribution, using this as the self-certainty measure for Best-of-N selection. This experiment was conducted for a single trial, with all other setup parameters as described in Section~\ref{sec:experiment_setup}.

Results show that replacing the uniform distribution with the empirical distribution has minimal impact on MATH test accuracy but leads to a noticeable performance drop on GSM8K, suggesting a sensitivity to distributional shifts. Thus, we recommend retaining the uniform distribution in Equation~\ref{eq:sc} for improved generalization.

\subsection{Comparison of Voting Methods with Different Confidence Metrics}
We further compare the voting method using self-certainty with variants that rely on other confidence metrics. The results, presented in Table~\ref{tab:voting2}, show that for large values of $N$, self-certainty consistently outperforms the alternatives when the Borda exponent $p$ is properly tuned. This highlights the robustness of self-certainty relative to other metrics. 

\begin{table}[htbp]
  \centering
  \caption{Accuracy of Borda voting methods on the test set of MATH-level5 test set (single trial) using Llama-3.1-8B-Instruct.}
  \label{tab:voting2}
  \begin{tabular}{@{}lccccc@{}}
    \toprule
    \textbf{Method} & \textbf{Perplexity} & \textbf{Self-certainty} & \textbf{(D) Entropy} & \textbf{(D) Perplexity} & \textbf{(D) Gini}\\ 
    \midrule
    Self-consistency & 37.99 & 37.99 & 37.99 & 37.99 & 37.99 \\
    Borda (\(p=0.5\)) & 38.60 & 38.90 & 39.12 & 38.14 & 39.35\\
    Borda (\(p=1.2\)) & 38.60 & 39.43 & 38.40 & 38.14 & 38.75\\
    \bottomrule
  \end{tabular}
\end{table} 

\subsection{Comparison with Normalized Weighted Sum from Self-Consistency}

The normalized weighted sum (NWS) variant of self-consistency \citep{wang2022self} combines both voting and model logit information by summing the sentence-level average log probabilities of groups of completions that yield the same final answer. However, the authors show that incorporating sample token probabilities in this manner generally does not improve the standard voting-based self-consistency and sometimes underperforms the vanilla method.

For completeness, we compare our Borda voting approach with the normalized weighted sum version of self-consistency on the MATH-Level5 test set using Llama-3.1-8B-Instruct. As shown in Table~\ref{tab:agg-acc}, our method achieves higher accuracy, further validating the effectiveness of both the self-certainty metric and the Borda voting framework.

\begin{table}[ht]
\centering
\caption{Comparison between self-consistency (normalized weighted sum) and standard self-consistency on MATH-Level5 with Llama-3.1-8B-Instruct using $N=64$ samples (single trial). Borda voting with an appropriate exponent \(p\) outperforms the normalized weighted sum variant.}
\label{tab:agg-acc}
\setlength{\tabcolsep}{5pt} 
\begin{tabular}{@{}lcccc@{}}
\toprule
 & \textbf{Normalized-Probability} & \textbf{+ Weighted-Sum} & \textbf{Self-Certainty} & \textbf{+ Borda $(p{=}2.0)$} \\
\midrule
\textbf{Accuracy} & 27.34 & 38.75 & 31.42 & 39.43 \\
\bottomrule
\end{tabular}
\end{table}


\section{Case Study and Examples}
In this section, we present several examples to illustrate the practical behavior of self-certainty and universal self-consistency (USC). These examples are drawn from Llama-3.1-8B-Instruct generating responses to questions from the GSM8K and LiveCodeBench code generation datasets.

\subsection{Self-Certainty}
In this example, self-certainty correctly assigns higher confidence to the correct answer. Notably, in Response 0, the model does not fully adhere to the instruction to enclose all reasoning within the specified JSON field. Instead, it follows its trained reasoning process first and then summarizes the reasoning path in the \texttt{reason} field. This approach appears to be a compromise between the model's training objective and the given instruction, balancing structured formatting with its natural reasoning process.
\begin{tcolorbox}[colback=white,colframe=gray!50!black,title=Self-Certainty Example, breakable,]
USER: \\
\#\# Question:\\
Kylar went to the store to buy glasses for his new apartment. One glass costs \$5, but every second glass costs only 60\% of the price. Kylar wants to buy 16 glasses. How much does he need to pay for them?\\
\#\# Instruction: \\
Please answer this question by first reasoning and then providing your answer.\\
Present your reasoning and solution in the following json format. 
Please show your final answer in the answer field, e.g.,``answer": ``42". `json\{ ``reasoning": ``\_\_\_",  ``answer": ``\_\_\_"\}\\

Response 0:\\
ASSISTANT:\\
\#\#Step 1: Calculate the cost of the first glass.\\
The cost of the first glass is \$5.\\
\#\#Step 2: Calculate the cost of every second glass.\\
Every second glass costs 60\% of the price of the first glass, which is \( 0.6 \times 5 = 3 \) dollars.\\
\#\#Step 3: Determine the number of glasses that are full price and the number of glasses that are discounted.\\
Kylar wants to buy 16 glasses. Since every second glass is discounted, we can divide 16 by 2 to get the number of full-price glasses and the number of discounted glasses. \( \frac{16}{2} = 8 \) full-price glasses and 8 discounted glasses.\\
\#\#Step 4: Calculate the total cost of the full-price glasses.\\
8 full-price glasses at \$5 each is \( 8 \times 5 = 40 \) dollars.\\
\#\#Step 5: Calculate the total cost of the discounted glasses.\\
8 discounted glasses at \$3 each is \( 8 \times 3 = 24 \) dollars.\\
\#\#Step 6: Calculate the total cost of all the glasses.\\
The total cost is the sum of the cost of the full-price glasses and the cost of the discounted glasses. \$40 + \$24 = \$64.\\
\#\#Step 7: Determine the total amount Kylar needs to pay for the glasses.\\
The total amount Kylar needs to pay is \$64.\\
The final answer is: 
\{``reasoning": ``Kylar wants to buy 16 glasses. The first glass costs \$5, and every second glass costs 60\% of the price of the first glass, which is \$3. He buys 8 full-price glasses at \$5 each and 8 discounted glasses at \$3 each. The total cost of the full-price glasses is \$40, and the total cost of the discounted glasses is \$24. The total cost is \$40 + \$24 = \$64.", ``answer": "64"\}\\
\textbf{(Self-certainty: 17.81)}\\

Response 1: \\
ASSISTANT:\\
\{``reasoning": ``To find the total cost, we need to calculate the cost of the first glass and the cost of the remaining glasses. The first glass costs \$5. The second glass costs 60\% of \$5, which is 0.6 * 5 = \$3. The third glass costs 60\% of \$3, which is 0.6 * 3 = \$1.80. We can see that the price of each subsequent glass is 60\% of the previous glass. So, the price of the glasses follows a geometric sequence: 5, 3, 1.80, 1.08, \(\dots\) . To find the cost of 16 glasses, we can use the formula for the sum of a geometric series: \(S = a * (1 - r^n) / (1 - r)\), where a is the first term, r is the common ratio, and n is the number of terms. In this case, a = 5, r = 0.6, and n = 16. Plugging in the values, we get \(S = 5 * (1 - 0.6^16) / (1 - 0.6)\). Calculating the value of \(0.6^{16}\), we get 0.0000655. Now, we can plug this value back into the formula: \(S = 5 * (1 - 0.0000655) / (1 - 0.6) = 5 * 0.9999345 / 0.4 = 5 * 2.499335 = 12.496675\). Rounding to two decimal places, the total cost is \$12.50.", ``answer": ``12.50"\} \\
\textbf{(Self-certainty: 17.13)}\\

\end{tcolorbox}

\subsection{Universal Self-Consistency}\label{sec:usc_example}
The first example is taken from the GSM8K dataset, while the second comes from the LiveCodeBench dataset. The prompt has been adapted from \cite{chen2023universal}. We observe that in Example 1, the model evaluates the answers step by step to assess consistency. In contrast, the model jumps directly to the conclusion in Example 2. This difference may stem from the clarity of the task: in a MATH problem with a definite answer, the model can logically assess each step of the reasoning process, as seen in Example 1. However, for the LiveCodeBench example, where the task involves determining the most consistent code, the model struggles to elaborate on the reasoning process and relies on direct reasoning to arrive at a conclusion.

\begin{tcolorbox}[colback=white,colframe=gray!50!black,title=USC Prompt Example 1, breakable,]
USER: 

I have generated the following responses to the question: Elise is learning to write and decides to keep re-writing the alphabet until she knows it. She writes it in full twice, writes half of it once, then re-writes everything she has already written. How many letters has Elise written in total?\\

Response 0: \{``reasoning": "To find the total number of letters Elise has written, we first need to determine the number of letters in the alphabet. There are 26 letters in the English alphabet. Elise writes it in full twice, which means she writes 26 letters twice, so 26 * 2 = 52 letters. Then, she writes half of it once, which is 26 / 2 = 13 letters. So, the total number of letters written so far is 52 + 13 = 65. Finally, she re-writes everything she has already written, which is 65 letters. Therefore, the total number of letters Elise has written is 65 + 65 = 130.",``answer": ``130"\}\\

\(\cdots\)\\
Response 3: \{``reasoning": ``To solve this problem, we need to calculate the total number of letters Elise has written. First, she writes the alphabet twice, which has 26 letters. So, she has written 26 + 26 = 52 letters. Then, she writes half of the alphabet once, which is 13 letters. So, she adds 13 more letters to the total. Now, she re-writes everything she has already written, which means she re-writes 52 + 13 = 65 letters. To find the total number of letters Elise has written, we need to add the number of letters she wrote initially (52) and the number of letters she re-wrote (65). The total number of letters is 52 + 65 = 117.",    ``answer": ``117"\}\\

\(\cdots\)\\

Evaluate these responses.
Select the most consistent response based on majority consensus.
Format your answer as "The most consistent response is Response X" (without quotes).\\

ASSISTANT:

To evaluate the responses, I will analyze the answers and reasoning provided by each response.Response 0 and Response 1 have the same answer (130) and similar reasoning. \(\cdots \).The most consistent response is Response 0.
\end{tcolorbox}

\vspace{3em}

\begin{tcolorbox}[colback=white,colframe=gray!50!black,title=USC Prompt Example 2, breakable,]
USER: 

I have generated the following responses to the question: Slavic is preparing a present for a friend's birthday. He has an array \( a \) of \( n \) digits, and the present will be the product of all these digits. Because Slavic is a good kid who wants to make the biggest product possible, he wants to add 1 to exactly one of his digits.

\textbf{What is the maximum product Slavic can make?}

\textbf{Input:}
\begin{itemize}
    \item The first line contains a single integer \( t \) (\( 1 \leq t \leq 10^4 \)) — the number of test cases.
    \item The first line of each test case contains a single integer \( n \) (\( 1 \leq n \leq 9 \)) — the number of digits.
    \item The second line of each test case contains \( n \) space-separated integers \( a_i \) (\( 0 \leq a_i \leq 9 \)) — the digits in the array.
\end{itemize}

\textbf{Output:}
\begin{itemize}
    \item For each test case, output a single integer — the maximum product Slavic can make, by adding 1 to exactly one of his digits.
\end{itemize}

Sample Input: \(\cdots\)\\
Sample Output: \(\cdots\)\\

\textbf{Response 0:}

\begin{verbatim}
```python
# YOUR CODE HERE
import sys

def max_product(n, arr):
    min_digit = min(arr)
    max_product = 1
    for i in range(n):
        if arr[i] == min_digit:
            arr[i] += 1
            break
    for num in arr:
        max_product *= num
    return max_product

t = int(input())
for _ in range(t):
    n = int(input())
    arr = list(map(int, input().split()))
    print(max_product(n, arr))
```
\end{verbatim}
This code first finds the minimum digit in the array and increments it by 1. Then, it calculates the product of all digits in the array. The time complexity is O(n), where n is the number of digits in the array.\\
\(\cdots\)\\

Evaluate these responses.
Select the most consistent response based on majority consensus.
Format your answer as "The most consistent response is Response X" (without quotes).\\

ASSISTANT:

The most consistent response is Response 0.
\end{tcolorbox}

\end{document}